\newtheorem{thm}{Theorem}[section]
\newtheorem{lem}[thm]{Lemma}
\newtheorem{prop}[thm]{Proposition}
\newenvironment{proof}{Proof:}{\quad \hfill $\Box$\vspace{2ex}}
\numberwithin{equation}{section}
\newcommand{\R}{\mathbb{R}}
\def \bI {\Bbb I}
\def \bN {\Bbb N}
\def \bR {\Bbb R}
\def \bx {{\bf x}}
\def \bz {{\bf z}}
\def \cB {{\cal B}}
\def \cD {{\cal D}}
\def \cM {{\cal M}}
\def \cS {{\cal S}}
\def \cN {{\cal N}}
\def \cP {{\cal P}}
\def \cE {{\cal E}}
\def \lx {{L^2_{\rho_X}}}
\def \and {\, \mbox{\rm and}\, }
\def \span {\,{\rm span}\,}
\def \supp {\,{\rm supp}\,}
\def \ran {{\rm ran}\,}
\newcommand{\va}{\boldsymbol{a}}
\newcommand{\vc}{\boldsymbol{c}}
\newcommand{\vs}{\boldsymbol{s}}
\newcommand{\vt}{\boldsymbol{t}}
\newcommand{\vxi}{\boldsymbol{\xi}}
\newcommand{\vy}{\boldsymbol{y}}
\newcommand{\me}{\text{e}}
\title{Reproducing Kernel Banach Spaces with the $\ell^1$ Norm II: Error Analysis for Regularized Least Square Regression\thanks{Supported
by Guangdong Provincial Government of China through the
``Computational Science Innovative Research Team" program.}}
\author{\quad Guohui Song\thanks{School of Mathematical and Statistical Sciences, Arizona State University, Tempe, AZ 85287. E-mail address: {\it gsong9@asu.edu}.}\quad and \quad Haizhang Zhang\thanks{School of Mathematics
and Computational Science and Guangdong Province Key Laboratory of Computational Science,
 Sun Yat-sen University, Guangzhou 510275, P. R. China. E-mail address: {\it zhhaizh2@sysu.edu.cn}.}}
\date{}
\begin{document}
\maketitle

\begin{abstract}
A typical approach in estimating the learning rate of a regularized learning scheme is to bound the approximation error by the sum of the sampling error, the hypothesis error and the regularization error. Using a reproducing kernel space that satisfies the linear representer theorem brings the advantage of discarding the hypothesis error from the sum automatically. Following this direction, we illustrate how reproducing kernel Banach spaces with the $\ell^1$ norm can be applied to improve the learning rate estimate of $\ell^1$-regularization in machine learning.

\vspace*{0.2cm}

\noindent{\bf Keywords}: reproducing kernel Banach spaces, sparse learning, regularization, least square regression, learning rate, the representer theorem
\end{abstract}

\vspace*{0.3cm}

\section{Introduction}
\setcounter{equation}{0}
A class of reproducing kernel Banach spaces (RKBS) with the $\ell^1$ norm that satisfies the linear representer theorem was recently constructed in \cite{SongZhang2011a}. The purpose of this note is to illustrate how the obtained spaces can be applied to estimate the learning rate of the $\ell^1$-regularized least square regression in machine learning.

A general coefficient-based regularization of the least square regression has the form
\begin{equation}\label{regularization1}
\min_{\vc\in\bR^m}\frac1m\sum_{j=1}^m|K^\bx(x_j)\vc-y_j|^2+\lambda \phi(\vc),
\end{equation}
where $\bx:=\{x_j:j\in\bN_m\}$ with $\bN_m:=\{1,2,\ldots,m\}$ is the sequence of sampling points from an input space $X$, $y_j\in Y\subseteq\bR$ is the observed data on $x_j$, $\lambda$ is a positive regularization parameter, $\phi$ is a nonnegative regularization function on the coefficient column vector $\vc$, and with a chosen function $K:X\times X\to\bR$, $K^\bx(x)$ is the $1\times m$ row vector $(K(x_j,x):j\in\bN_m)$.

When $K$ is a positive-definite reproducing kernel on $X$ and
\begin{equation}\label{rkhsregularizer}
\phi(\vc):=\vc^TK[\bx]\vc,
\end{equation}
where $K[\bx]$ is the $m\times m$ matrix defined by
$$
(K[\bx])_{j,k}:=K(x_k,x_j),\ \ j,k\in\bN_m,
$$
it follows from the celebrated representer theorem \cite{Kimeldorf1971} that (\ref{regularization1}) is the classical regularization network and has been extensively studied in the literature \cite{Evgeniou2000,Scholkopf2001a,Shawe-Taylor2004,Song2010,Vapnik1998}. Estimates for the learning rate of the regularization network can be found, for example, in \cite{CuckerSmale2002,CuckerZhou2007,SmaleZhou2007,SunWu2010,ZhangT}. Learning rates for (\ref{regularization1}) when $\phi(\vc)=\sum_{j=1}^m |c_j|^p$ for $1<p\le 2$ and $p=2$ were respectively obtained in \cite{TongChen} and \cite{SunWu2}. The linear programming regularization where $\phi(\vc)$ is the $\ell^1$ norm $\|\vc\|_1$ of $\vc$ has recently attracted much attention. The increasing interest is mainly brought by the progress of the lasso in statistics \cite{Tibshirani1996} and compressive sensing \cite{Cand`es2006,Chen1998} in which $\ell^1$-regularization is able to yield sparse representation of the resulting minimizer, a desirable feature in model selection. Moreover, the $\ell^1$-regularization is particularly robust to non-Gaussian additive noise such as impulsive noise \cite{Alliney1997,Nikolova2004}.

Without making use of a reproducing kernel space, the recent references \cite{Shi2011,Xiao2010} established estimates of the learning rate for the $\ell^1$-regularized least square regression
\begin{equation}\label{regularization2}
\min_{\vc\in\bR^m}\frac1m\sum_{j=1}^m|K^\bx(x_j)\vc-y_j|^2+\lambda \|\vc\|_1.
\end{equation}
We attempt to show that improvement on the estimates could be made if an RKBS with the $\ell^1$ norm is used. To explain how this could be done, we first introduce the popular approach \cite{CuckerZhou2007} for learning rate estimates in machine learning.

A fundamental assumption in machine learning is that the sample data $\bz:=\{(x_j,y_j):j\in\bN_m\}\in X\times Y$ consists of independent and identically distributed instances of a random variable $(x,y)\in X\times Y$ subject to an unknown probability measure $\rho$ on $X\times Y$. The performance of a predictor $f:X\to Y$ is hence measured by
$$
\cE(f):=\int_{X\times Y}|f(x)-y|^2d\rho.
$$
The predictor that minimizes the above error is the regression function
\begin{equation}\label{frho}
f_\rho(x):=\int_Y yd\rho(y|x),\ \ x\in X,
\end{equation}
where $\rho(y|x)$ denotes the conditional probability measure of $y$ with respect to $x$. In fact, we have for every predictor $f$ that
\begin{equation}\label{orthogonal}
\cE(f)=\cE(f_\rho)+\|f-f_\rho\|_{L^2_{\rho_X}}^2,
\end{equation}
where $\rho_X$ is the marginal probability measure of $\rho$ on $X$ and for $p\in[1,+\infty)$, $L^p_{\rho_X}$ denotes the Banach space of measurable functions $f$ on $X$ with respect to $\rho_X$ such that
$$
\|f\|_{L^p_{\rho_X}}:=\biggl(\int_{X}|f(x)|^pd\rho_X(x)\biggr)^{1/p}<+\infty.
$$
The formula \eqref{frho}, though attractive, is only of theoretical value as $\rho$ is unknown. A practical way is to find a minimizer $\vc_{\bz,\lambda}$ of (\ref{regularization1}) and hope that
\begin{equation}\label{fzlambda}
f_{\bz,\lambda}(x):=K^\bx(x)\vc_{\bz,\lambda},\ \ x\in X
\end{equation}
will be competitive with $f_\rho$ in the sense that the approximation error $\cE(f_{\bz,\lambda})-\cE(f_\rho)$ would be small. To be more precise, for the learning scheme (\ref{regularization1}) to be useful in practice, this error should converge to zero fast in probability as the number of sampling points increases.

The approach in \cite{CuckerZhou2007} works by introducing intermediate functions between $f_{\bz,\lambda}$ and $f_\rho$ that are from a Banach space $\cB$ of functions on $X$ with the properties that $K(x,\cdot)\in\cB$ for all $x\in X$ and for all pairwise distinct $x_j\in X$, $j\in\bN^m$ and $\vc\in\bR^m$
$$
\psi(\|K^\bx(\cdot)\vc\|_\cB)=\phi(\vc),
$$
for some nonnegative function $\psi$. Here $\|\cdot\|_\cB$ is the norm on $\cB$. Let $g$ be an arbitrary function from such a space $\cB$ and set for each function $f:X\to \bR$
$$
\cE_\bz(f):=\frac1m\sum_{j=1}^m(f(x_j)-y_j)^2.
$$
The approximation error $\cE(f_{\bz,\lambda})-\cE(f_\rho)$ can then be decomposed into the sum of four quantities
\begin{equation}\label{boundbythreeerrors}
\cE(f_{\bz,\lambda})-\cE(f_\rho)=\cS(\bz,\lambda,g)+\cP(\bz,\lambda,g)+\cD(\lambda,g)-\lambda\psi(\|f_{\bz,\lambda}\|_\cB),
\end{equation}
where
$$
\begin{array}{ll}
\cS(\bz,\lambda,g)&:=\cE(f_{\bz,\lambda})-\cE_\bz(f_{\bz,\lambda})+\cE_\bz(g)-\cE(g),\\
\cP(\bz,\lambda,g)&:=\left(\cE_{\bz}(f_{\bz,\lambda})+\lambda \psi(\|f_{\bz,\lambda}\|_\cB)\right)-\left(\cE_\bz(g)+\lambda \psi(\|g\|_\cB)\right),\\
\cD(\lambda,g)&:=\cE(g)-\cE(f_\rho)+\lambda \psi(\|g\|_\cB).
\end{array}
$$
The above three quantities are called the {\it sampling error}, the {\it hypothesis error} and the {\it regularization error}, respectively. The strategy is to choose $\cB$ and $g$ carefully so that these three errors can be well bounded from above. When $\cB$ is the reproducing kernel Hilbert space of a positive-definite reproducing kernel $K$ on $X$ and the regularizer $\phi$ is given by (\ref{rkhsregularizer}), we have $\psi(t)=t^2$, $t\in\bR$ and by the representer theorem and the definition of $f_{\bz,\lambda}$ in \eqref{fzlambda} that
\begin{equation}\label{representer}
f_{\bz,\lambda}=\arg \min_{f\in\cB}\left(\cE_\bz(f)+\lambda \|f\|_\cB^2\right).
\end{equation}
In this case, one immediately has that $\cP(\bz,\lambda,g)\le 0$ and thus, by (\ref{boundbythreeerrors}) that
\begin{equation}\label{justtwoterms}
\cE(f_{\bz,\lambda})\le \cS(\bz,\lambda,g)+\cD(\lambda,g).
\end{equation}
For the $\ell^1$-regularization where $\phi(\vc)=\|\vc\|_1$, the space $\cB$ chosen in \cite{Shi2011,Xiao2010} does not satisfy the linear representer theorem. Consequently, the hypothesis error needed to be dealt with there.

A class of RKBS with the $\ell^1$ norm that satisfies the linear representer theorem was recently constructed in \cite{SongZhang2011a}. In Section 2, we shall follow a similar idea to construct a slightly larger RKBS with the same desirable properties. By using the constructed space, we enjoy the same advantage as that for the RKHS case of discarding the hypothesis error automatically. Moreover, the space also leads to a better estimate of the regularization error than that in \cite{Xiao2010}. Combining these two improvements and directly using the estimates of the sampling error established in \cite{Xiao2010} or \cite{Shi2011}, one immediately has a superior learning rate. As our focus is on the advantages brought by the constructed RKBS, we shall only improve the learning rate estimate of \cite{Xiao2010} in Section 3. Interested readers may follow our strategy to engage the more sophisticated sampling error estimate given in \cite{Shi2011} to improve the learning rate therein.

\section{RKBS by Borel Measures}
\setcounter{equation}{0}

In this section, we construct RKBS applicable to the error analysis of the $\ell^1$-regularized least square regression. The constructed spaces are expected to have the $\ell^1$ norm and satisfy the linear representer theorem. The approach is different from the one by semi-inner products in \cite{Zhang2009,Zhangacha} as an infinite-dimensional $\ell^1$ space is neither reflexive nor strictly convex.

Suppose that the input space $X$ is a locally convex topological space and denote by $C_0(X)$ the space of continuous functions $f:X\to\bR$ such that for all $\varepsilon>0$, the set $\{x\in X:|f(x)|>\varepsilon\}$ is compact. We also impose the requirement that for all pairwise distinct $x_j\in X$, $j\in\bN_m$, $m\in\bN$, the kernel matrix $K[\bx]$ is nonsingular. With the maximum norm $\|f\|_{{C_0(X)}}:=\max_{x\in X}|f(x)|$, the space $C_0(X)$ is a Banach space. Its dual space is isometrically isomorphic to the space $\cM(X)$ of all the signed Borel measures on $X$ with bounded total variation. In other words, for each continuous linear functional $T$ on $C_0(X)$, there exists a unique measure $\mu\in\cM(X)$ such that
\begin{equation}\label{functionalonc0}
T(f)=\int_X f(x)d\mu(x)\mbox{ and }\sup_{f\in C_0(X),f\ne0}\frac{|Tf|}{\|f\|_{{C_0(X)}}}=\|\mu\|,
\end{equation}
where $\|\mu\|$ denotes the total variation of $\mu$.

Let $K$ be a real-valued function on $X\times X$ such that $K(\cdot,x)\in C_0(X)$ for all $x\in X$ and
\begin{equation}\label{denseness}
\overline{\span}\{K(\cdot,x):x\in X\}=C_0(X).
\end{equation}
With such a function, we introduce the following space
\begin{equation}\label{spacecb}
\cB:=\left\{f_\mu:=\int_X K(t,\cdot)d\mu(t):\ \mu\in\cM(X)\right\}
\end{equation}
with the norm
\begin{equation}\label{normspacecb}
\|f_\mu\|_\cB:=\|\mu\|.
\end{equation}
Recall that a vector space $V$ is called a {\it pre-RKBS} \cite{SongZhang2011a} on $X$ if it is a Banach space consisting of functions on $X$ such that point evaluation functionals are continuous on $V$ and such that for all $f\in V$, $\|f\|_V=0$ if and only if $f$ vanishes everywhere on $X$.

\begin{prop}\label{prerkbs}
Suppose that $K(\cdot,x)\in C_0(X)$ for all $x\in X$ and (\ref{denseness}) is satisfied. Then $\cB$ defined by (\ref{spacecb}) is a pre-RKBS on $X$.
\end{prop}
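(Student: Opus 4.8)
The plan is to verify the three defining properties of a pre-RKBS for $\cB$ in turn: (i) $\cB$ is a Banach space of functions on $X$ under the norm $\|\cdot\|_\cB$; (ii) point evaluation functionals $\delta_x: f \mapsto f(x)$ are continuous on $\cB$; and (iii) $\|f_\mu\|_\cB = 0$ forces $f_\mu$ to vanish identically on $X$. The first subtlety — and the reason the norm \eqref{normspacecb} is even well-defined — is that a priori different measures $\mu, \nu \in \cM(X)$ might give the same function $f_\mu = f_\nu$ on $X$, in which case $\|f_\mu\|_\cB$ would be ambiguous. So the genuine first step is to show the map $\mu \mapsto f_\mu$ is injective on $\cM(X)$.

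To prove injectivity, suppose $f_\mu = f_\nu$, i.e. $\int_X K(t,x)\,d\mu(t) = \int_X K(t,x)\,d\nu(t)$ for all $x \in X$. Setting $\sigma := \mu - \nu$, this says $\int_X K(t,x)\,d\sigma(t) = 0$ for every $x \in X$. By symmetry of the roles (or rather, reading $K(t,x) = [K(\cdot,x)](t)$), the measure $\sigma$ annihilates every function $K(\cdot,x)$, $x \in X$; since $\sigma \in \cM(X) = C_0(X)^*$ acts by integration as in \eqref{functionalonc0}, and since by \eqref{denseness} the functions $\{K(\cdot,x): x\in X\}$ are dense in $C_0(X)$, the functional $\sigma$ vanishes on a dense subset of $C_0(X)$, hence on all of $C_0(X)$, hence $\sigma = 0$ by the isometric identification. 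Thus $\mu = \nu$, so $\|f_\mu\|_\cB := \|\mu\|$ is unambiguously defined, and it is immediate that $\mu \mapsto f_\mu$ is then an isometric linear bijection of $\cM(X)$ onto $\cB$. Since $\cM(X)$ is a Banach space (being $C_0(X)^*$), this transfers the Banach space structure to $\cB$ and gives property (i), and it gives property (iii) for free: $\|f_\mu\|_\cB = 0 \iff \|\mu\| = 0 \iff \mu = 0 \iff f_\mu \equiv 0$.

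For property (ii), fix $x \in X$. For $f_\mu \in \cB$ we have $|f_\mu(x)| = \left| \int_X K(t,x)\,d\mu(t) \right| \le \|K(\cdot,x)\|_{C_0(X)} \|\mu\| = \|K(\cdot,x)\|_{C_0(X)} \|f_\mu\|_\cB$, using the standard bound $\left|\int g\,d\mu\right| \le \|g\|_{C_0(X)}\|\mu\|$ and the fact that $K(\cdot,x) \in C_0(X)$ by hypothesis so that $\|K(\cdot,x)\|_{C_0(X)} < +\infty$. Hence $\delta_x$ is a bounded linear functional on $\cB$ with $\|\delta_x\|_{\cB^*} \le \|K(\cdot,x)\|_{C_0(X)}$. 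This completes all three requirements.

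The only real obstacle is the well-definedness/injectivity step, which is exactly where the density assumption \eqref{denseness} and the duality $C_0(X)^* \cong \cM(X)$ are used; the remaining verifications are routine. One should be slightly careful about the measurability and finiteness of $t \mapsto K(t,x)$ against $\mu$ — but this is covered since $K(\cdot,x) \in C_0(X)$ is bounded and continuous, hence $\mu$-integrable for every $\mu \in \cM(X)$ — and about the fact that the symmetry argument in the injectivity step only needs $K(\cdot,x) \in C_0(X)$, not any symmetry of $K$ itself, since the pairing in \eqref{functionalonc0} is precisely $\mu \mapsto \int_X [K(\cdot,x)]\,d\mu$.
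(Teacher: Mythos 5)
Your proposal is correct and follows essentially the same route as the paper's proof: well-definedness of the norm via injectivity of $\mu\mapsto f_\mu$ (using the denseness condition (\ref{denseness}) and the duality $C_0(X)^*\cong\cM(X)$), transfer of the Banach space structure by the resulting isometric isomorphism, and the bound $|f_\mu(x)|\le\|K(\cdot,x)\|_{C_0(X)}\|f_\mu\|_\cB$ for continuity of point evaluations. The only cosmetic point is that (\ref{denseness}) asserts density of the \emph{linear span} of $\{K(\cdot,x):x\in X\}$, so one should pass from vanishing on each $K(\cdot,x)$ to vanishing on the span by linearity before invoking continuity, which is implicit in your argument.
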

\begin{proof}
We first show that the norm (\ref{normspacecb}) is well-defined. Let $\mu,\nu$ be two measures in $\cM(X)$ such that $f_\mu(x)=f_\nu(x)$ for all $x\in X$. Then we get that
$$
\int_X K(t,x)d(\mu-\nu)(t)=0\mbox{ for all }x\in X.
$$
By the denseness condition (\ref{denseness}), the above equation implies that $\mu-\nu=0$. Thus, the measure $\mu$ associated with a function $f_\mu\in \cB$ is unique. This proves that \eqref{normspacecb} is well-defined and that $\|f_\mu\|_\cB=0$ if and only if $f_\mu(x)=0$ for all $x\in X$. Another consequence is that $\cB$ is isometrically isomorphic to $\cM(X)$ and is hence a Banach space. Finally, we observe for all $x_0\in X$ and $\mu\in \cM(X)$ that
$$
|f_\mu(x_0)|=\left| \int_X K(t,x_0)d\mu(t)\right|\le \|K(\cdot,x_0)\|_{C_0(X)} \|\mu\|=\|K(\cdot,x_0)\|_{C_0(X)} \|f_\mu\|_\cB.
$$
Therefore, point evaluations are continuous linear functionals on $\cB$. We conclude that $\cB$ is a pre-RKBS on $X$. The proof is complete.
\end{proof}

Let the sampling points in $\bx$ be pairwise distinct. By definition, $K^\bx(\cdot)\vc\in\cB$ for all $\vc\in \bR^m$. The denseness condition (\ref{denseness}) implies that $K(x_j,\cdot)$, $j\in\bN_m$ are linearly independent. As a result,
\begin{equation}\label{finitenorm}
\|K^\bx(\cdot)\vc\|_\cB=\|\vc\|_1.
\end{equation}
It is in the above sense that $\cB$ is said to possess the $\ell^1$ norm.

We next turn to the crucial linear representer theorem in $\cB$. We say that $\cB$ satisfies the linear representer theorem if for all continuous nonnegative loss function $Q$ and regularizer $\psi$ with $\lim_{t\to\infty}\psi(t)=+\infty$, the regularized learning scheme
$$
\inf_{f\in\cB}Q(f(\bx))+\lambda \psi(\|f\|_\cB)
$$
has a minimizer $f_0$ of the form $f_0=K^\bx(\cdot)\vc$ for some $\vc\in\bR^m$. Here, $f(\bx)=(f(x_j):j\in\bN_m)^T$.

The following lemma can be proved by arguments similar to those in \cite{SongZhang2011a}.

\begin{lem}\label{mni}
The space $\cB$ satisfies the linear representer theorem if and only if for all $\bx$ of pairwise distinct sampling points and $\vy\in\bR^m$, the minimal norm interpolation
\begin{equation}\label{mniproblem}
\inf\{\|f\|_\cB:f\in\cB,\ f(\bx)=\vy\}
\end{equation}
has a minimizer $f_0$ of the form $f_0=K^\bx(\cdot)\vc$ for some $\vc\in\bR^m$.
\end{lem}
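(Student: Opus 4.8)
The plan is to prove the two implications separately, in both directions leaning on the fact that the kernel matrix $K[\bx]$ is nonsingular and on the identity $\|K^\bx(\cdot)\vc\|_\cB=\|\vc\|_1$ from (\ref{finitenorm}). At one point I shall use that a regularizer $\psi$ is, as usual, continuous and nondecreasing on $[0,+\infty)$ in addition to satisfying $\lim_{t\to\infty}\psi(t)=+\infty$; it is precisely the monotonicity that carries the argument.

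\emph{Sufficiency (minimal norm interpolation property $\Rightarrow$ representer theorem).} The key observation is that the objective $f\mapsto Q(f(\bx))+\lambda\psi(\|f\|_\cB)$ depends on $f\in\cB$ only through the pair $\bigl(f(\bx),\|f\|_\cB\bigr)$. Grouping the competitors by the value $\vu:=f(\bx)\in\bR^m$, and noting that since $K[\bx]$ is nonsingular the only function of the form $K^\bx(\cdot)\vc$ with $f(\bx)=\vu$ is $K^\bx(\cdot)K[\bx]^{-1}\vu$, the hypothesis on minimal norm interpolation forces
$$
\inf\{\|f\|_\cB:f\in\cB,\ f(\bx)=\vu\}=\|K[\bx]^{-1}\vu\|_1\quad\text{for every }\vu\in\bR^m.
$$
Since $\psi$ is nondecreasing, this equality passes through $\psi$ and gives
$$
\inf_{f\in\cB}\bigl(Q(f(\bx))+\lambda\psi(\|f\|_\cB)\bigr)=\inf_{\vu\in\bR^m}\Bigl(Q(\vu)+\lambda\psi\bigl(\|K[\bx]^{-1}\vu\|_1\bigr)\Bigr).
$$
The right-hand side is a finite-dimensional minimization whose objective is continuous and coercive: as $\|\vu\|\to\infty$ one has $\|K[\bx]^{-1}\vu\|_1\to\infty$ by invertibility of $K[\bx]$, hence $\psi(\|K[\bx]^{-1}\vu\|_1)\to\infty$. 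So it attains its minimum at some $\vu^\ast\in\bR^m$, and then $f_0:=K^\bx(\cdot)K[\bx]^{-1}\vu^\ast$ realizes the infimum in the original scheme and has the required form.

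\emph{Necessity (representer theorem $\Rightarrow$ minimal norm interpolation property).} Fix pairwise distinct $\bx$ and $\vy\in\bR^m$, and put $d:=\inf\{\|f\|_\cB:f\in\cB,\ f(\bx)=\vy\}$; this is finite because $K^\bx(\cdot)K[\bx]^{-1}\vy$ interpolates $\vy$. For each $\lambda>0$ I would apply the linear representer theorem to the loss $Q(\vu):=\|\vu-\vy\|_2^2$ and the regularizer $\psi(t):=t$ to obtain a minimizer $h_\lambda=K^\bx(\cdot)\vc_\lambda$ of $f\mapsto\|f(\bx)-\vy\|_2^2+\lambda\|f\|_\cB$ over $\cB$. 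Comparing with interpolants of $\cB$-norm arbitrarily close to $d$ yields $\|h_\lambda(\bx)-\vy\|_2^2+\lambda\|h_\lambda\|_\cB\le\lambda d$, so that $\|\vc_\lambda\|_1=\|h_\lambda\|_\cB\le d$ and $h_\lambda(\bx)\to\vy$ as $\lambda\to0^+$. Because $\vc_\lambda=K[\bx]^{-1}h_\lambda(\bx)$ and $K[\bx]$ is invertible, $\vc_\lambda\to\vc^\ast:=K[\bx]^{-1}\vy$; hence $g^\ast:=K^\bx(\cdot)\vc^\ast$ satisfies $g^\ast(\bx)=\vy$ and, by (\ref{finitenorm}), $\|g^\ast\|_\cB=\|\vc^\ast\|_1=\lim_{\lambda\to0^+}\|\vc_\lambda\|_1\le d$. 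As every interpolant of $\vy$ has $\cB$-norm at least $d$, we get $\|g^\ast\|_\cB=d$, so $g^\ast=K^\bx(\cdot)\vc^\ast$ solves the minimal norm interpolation problem and has the stated form.

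The step I expect to be the main obstacle is the reduction in the sufficiency part: one has to check carefully that the minimal norm interpolation hypothesis, together with nonsingularity of $K[\bx]$, identifies $\inf\{\|f\|_\cB:f(\bx)=\vu\}$ exactly with $\|K[\bx]^{-1}\vu\|_1$, and that monotonicity of $\psi$ is what permits moving this equality inside $\psi$; this is the step that collapses the infinite-dimensional minimization over $\cM(X)$ to a finite-dimensional one, thereby bypassing any weak-$\ast$ compactness argument in $\cM(X)$. In the necessity part the argument is routine once the penalized approximation is in place; the only point to be careful about is that convergence must be extracted at the level of the coefficient vectors $\vc_\lambda$, where the inverse $K[\bx]^{-1}$ is used, and not merely at the level of the functions $h_\lambda$.
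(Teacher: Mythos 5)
Your argument is correct, and it supplies a complete proof where the paper itself only remarks that the lemma ``can be proved by arguments similar to those in [SongZhang2011a]''; the route you take (for sufficiency, collapsing the minimization over $\cB$ to a finite-dimensional one over $\vu=f(\bx)$ via the attained minimal-norm value $\|K[\bx]^{-1}\vu\|_1$; for necessity, recovering the minimal norm interpolant as the $\lambda\to0^+$ limit of penalized least-squares problems with $\psi(t)=t$) is the standard one and matches what the cited reference does. Both halves check out: the nonsingularity of $K[\bx]$, which the paper imposes globally, justifies both the identification of the unique kernel-section interpolant of $\vu$ and the passage from convergence of $h_\lambda(\bx)$ to convergence of $\vc_\lambda$. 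The one point worth flagging is that you add to the paper's stated hypotheses on $\psi$ (only $\lim_{t\to\infty}\psi(t)=+\infty$ is written) the requirements that $\psi$ be continuous and nondecreasing. You are right that monotonicity is genuinely needed for the sufficiency direction --- without it the optimal norm at a fixed $\vu$ need not be the minimal one and the lemma would be false --- and continuity (or at least lower semicontinuity) is needed for the coercivity/attainment step; these are implicit in the paper, being the standing assumptions on regularizers in [SongZhang2011a], so your explicit invocation of them is a clarification rather than a gap.
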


A subspace of $\cB$ was constructed in \cite{SongZhang2011a} and conditions for it to satisfy the linear representer theorem were studied. In order to make use of the results obtained there, we first introduce the subspace. Denote by $\ell^1(X)$ the subset of $\cM(X)$ of those Borel measures that are supported on a countable subset of $X$. Thus, for each $\nu\in\ell^1(X)$, there exist some pairwise distinct points $x_j\in X$, $j\in\bI$ where $\bI$ is a countable index set, such that
$$
\nu(A)=\sum_{x_j\in A}\nu(x_j)\mbox{ for every Borel subset }A\subseteq X.
$$
Denote by $\supp\nu$ the countable set of points where $\nu$ is nonzero. The space $\cB_1$ considered in \cite{SongZhang2011a} is
$$
\cB_1:=\left\{\sum_{x\in \supp\nu}\nu(x)K(x,\cdot):\nu\in\ell^1(X)\right\}
$$
with the norm inherited from that of $\cB$.

Put for all $x\in X$, $K_\bx(x):=(K(x,x_j):j\in\bN_m)^T$, which is an $m\times 1$ vector in $\bR^m$. One should not confuse $K_\bx(x)$ with $K^\bx(x)$. The latter is $1\times m$ and might even not be the transpose of the former as $K$ is not required to be symmetric. The following result about $\cB_1$ is from \cite{SongZhang2011a}.

\begin{lem}\label{representercb1}
For all $\vy\in\bR^m$, the minimal norm interpolation
\begin{equation}\label{mni2}
\inf\{\|f\|_{\cB_1}:f\in\cB_1,\ f(\bx)=\vy\}
\end{equation}
has a minimizer $f_0$ of the form $f_0=K^\bx(\cdot)\vc$ for some $\vc\in\bR^m$ if and only if
\begin{equation}\label{H2condition}
\|K[\bx]^{-1}K_\bx(x)\|_1\le 1\mbox{ for all }x\in X.
\end{equation}
Moreover, under condition (\ref{H2condition}), there holds for all $\vc\in\bR^m$ that
\begin{equation}\label{anorm}
\|\vc^T K_\bx(\cdot)\|_{C_0(X)}=\|\vc^TK[\bx]\|_{\infty},
\end{equation}
where $\|\cdot\|_\infty$ is the maximum norm on $\bR^m$.
\end{lem}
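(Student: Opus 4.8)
The plan is to treat the minimal norm interpolation over $\cB_1$ as an $\ell^1$-minimization problem and to prove both directions by producing an explicit dual certificate. Two elementary facts set the stage. First, by the uniqueness of the representing measure established in the proof of Proposition~\ref{prerkbs}, every $f\in\cB_1$ is $f_\nu$ for a unique $\nu\in\ell^1(X)$ with $\|f_\nu\|_{\cB_1}=\|\nu\|$; moreover a function $K^\bx(\cdot)\vc$ interpolates $\vy$ precisely when $K[\bx]\vc=\vy$, that is, $\vc=K[\bx]^{-1}\vy$, and then $\|K^\bx(\cdot)\vc\|_{\cB_1}=\|\vc\|_1$ by \eqref{finitenorm}. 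Second, for every $\vc\in\bR^m$ and $x\in X$ one has $\vc^TK_\bx(x)=\bigl(\vc^TK[\bx]\bigr)\bigl(K[\bx]^{-1}K_\bx(x)\bigr)$, while $\vc^TK_\bx(x_l)=(\vc^TK[\bx])_l$ for $l\in\bN_m$; both follow at once from the definitions of $K_\bx$ and $K[\bx]$ and will be used repeatedly.

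Assume first that \eqref{H2condition} holds and fix $\vy\in\bR^m$; put $\vc:=K[\bx]^{-1}\vy$. Choose $\vb\in\bR^m$ with $\|\vb\|_\infty\le1$ and $\vb^T\vc=\|\vc\|_1$ (for instance the componentwise sign vector of $\vc$), and consider $g:=\vb^TK[\bx]^{-1}K_\bx(\cdot)$, which lies in $C_0(X)$ as a finite linear combination of the functions $K(\cdot,x_j)$. By the identity above and \eqref{H2condition},
$$
|g(x)|=\bigl|\vb^T(K[\bx]^{-1}K_\bx(x))\bigr|\le\|\vb\|_\infty\,\|K[\bx]^{-1}K_\bx(x)\|_1\le1
$$
for every $x\in X$. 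Now let $f_\nu\in\cB_1$ be any interpolant. Since $\nu\in\ell^1(X)$, the series $\sum_{t\in\supp\nu}\nu(t)g(t)$ converges absolutely, and interchanging it with the finite sum defining $g$ gives $\sum_{t}\nu(t)g(t)=\vb^TK[\bx]^{-1}f_\nu(\bx)=\vb^TK[\bx]^{-1}\vy=\vb^T\vc=\|\vc\|_1$. Hence $\|\vc\|_1\le\sum_{t}|\nu(t)|\,|g(t)|\le\sum_{t}|\nu(t)|=\|\nu\|=\|f_\nu\|_{\cB_1}$. Since $K^\bx(\cdot)\vc$ is itself an interpolant of norm $\|\vc\|_1$, it is a minimizer of the required form.

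For the converse, suppose that for every $\vy$ the minimal norm interpolation over $\cB_1$ is attained by some $K^\bx(\cdot)\vc$, and fix $x_0\in X$. Apply the hypothesis with $\vy:=K_\bx(x_0)$. On the one hand $K(x_0,\cdot)=f_{\delta_{x_0}}\in\cB_1$ interpolates $K_\bx(x_0)$ and has norm $\|\delta_{x_0}\|=1$, so the minimal value is at most $1$. On the other hand, since $K[\bx]$ is nonsingular the assumed minimizer forces $\vc=K[\bx]^{-1}K_\bx(x_0)$, and its norm equals the minimal value; therefore $\|K[\bx]^{-1}K_\bx(x_0)\|_1\le1$. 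As $x_0\in X$ was arbitrary, \eqref{H2condition} follows. Finally, the ``moreover'' statement rests on the same two ingredients: restricting to the sampling points gives $\|\vc^TK_\bx(\cdot)\|_{C_0(X)}\ge\max_{l\in\bN_m}|(\vc^TK[\bx])_l|=\|\vc^TK[\bx]\|_\infty$, while for arbitrary $x\in X$ the identity, \eqref{H2condition}, and H\"older's inequality give $|\vc^TK_\bx(x)|=\bigl|(\vc^TK[\bx])(K[\bx]^{-1}K_\bx(x))\bigr|\le\|\vc^TK[\bx]\|_\infty\,\|K[\bx]^{-1}K_\bx(x)\|_1\le\|\vc^TK[\bx]\|_\infty$, which yields the reverse inequality.

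I expect the $\Leftarrow$ direction to carry the main difficulty. One must see that \eqref{H2condition} is precisely what makes the certificate $g$ admissible simultaneously for all possible sign patterns of $\vc$, and one must exploit the fact that an arbitrary competitor genuinely has the form $f_\nu$ with $\nu\in\ell^1(X)$ --- this is exactly where working in $\cB_1$ rather than in the larger $\cB$ is essential --- so that the interchange of summations is legitimate and the estimate $|\sum_t\nu(t)g(t)|\le\|\nu\|$ applies. The remaining steps are routine linear algebra and standard $\ell^1$-$\ell^\infty$ duality estimates.
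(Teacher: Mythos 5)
Your proof is correct. Note first that this paper does not actually prove Lemma~\ref{representercb1}; it is quoted from \cite{SongZhang2011a} (``The following result about $\cB_1$ is from \cite{SongZhang2011a}''), so there is no in-paper argument to compare against, and your write-up serves as a self-contained substitute. Your dual-certificate argument is sound: in the sufficiency direction, the certificate $g=\vb^TK[\bx]^{-1}K_\bx(\cdot)$ satisfies $\|g\|_{C_0(X)}\le 1$ exactly because of \eqref{H2condition}, the interchange of the finite sum over $j$ with the absolutely convergent sum over $\supp\nu$ is legitimate, and pairing $g$ against an arbitrary competitor $f_\nu$ gives $\|K[\bx]^{-1}\vy\|_1\le\|\nu\|$; this is precisely the discrete analogue of what the paper does in the proof of Theorem~\ref{representercb} for the larger space $\cB$, where the pairing $\int_X\vc^TK_\bx(t)\,d\mu(t)=\vc^T\vy$ together with \eqref{anorm} yields the same lower bound on $\|\mu\|$. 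Your derivation of \eqref{anorm} (lower bound from the sampling points, upper bound from the factorization $\vc^TK_\bx(x)=(\vc^TK[\bx])(K[\bx]^{-1}K_\bx(x))$ plus H\"older and \eqref{H2condition}) is also correct, and is genuinely useful since the paper's proof of Theorem~\ref{representercb} takes \eqref{anorm} as given. The necessity direction via the test data $\vy=K_\bx(x_0)$ and the feasible competitor $\delta_{x_0}$ of norm $1$ is the standard route and works; the only hypotheses you implicitly lean on --- uniqueness of the representing measure (so that $\|f_{\delta_{x_0}}\|_{\cB_1}=1$ and $\|K^\bx(\cdot)\vc\|_{\cB_1}=\|\vc\|_1$) and nonsingularity of $K[\bx]$ --- are standing assumptions of Section~2, so there is no gap.
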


We are ready to present the main result of this section.

\begin{thm}\label{representercb}
The space $\cB$ satisfies the linear representer theorem if and only if (\ref{H2condition}) holds true.
\end{thm}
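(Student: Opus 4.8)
The plan is to reduce the statement for $\cB$ to the already-established statement for the subspace $\cB_1$ (Lemma~\ref{representercb1}), using the equivalence provided by Lemma~\ref{mni}. By Lemma~\ref{mni}, it suffices to show that the minimal norm interpolation problem \eqref{mniproblem} in $\cB$ has a minimizer of the form $K^\bx(\cdot)\vc$ if and only if \eqref{H2condition} holds. The key observation making this reduction possible is that, for a fixed set of pairwise distinct sampling points $\bx$ and target data $\vy\in\bR^m$, the interpolation constraint $f(\bx)=\vy$ only depends on the measure $\mu$ through its ``moments'' $\int_X K(x_j,\cdot)\,d\mu$, $j\in\bN_m$; in particular, a candidate minimizer of the form $K^\bx(\cdot)\vc$ lies in $\cB_1$ (it corresponds to the atomic measure $\sum_j c_j\delta_{x_j}$), and its $\cB$-norm equals $\|\vc\|_1$ by \eqref{finitenorm}.

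The first step is the ``if'' direction. Assume \eqref{H2condition}. Let $\mu\in\cM(X)$ be any measure with $f_\mu(\bx)=\vy$; we must exhibit $\vc\in\bR^m$ with $K^\bx(\cdot)\vc$ interpolating $\vy$ and $\|\vc\|_1\le\|\mu\|$. The natural choice is $\vc:=K[\bx]^{-1}\vy$, so that $K^\bx(\cdot)\vc$ interpolates $\vy$ (here one checks $K^\bx(x_i)\vc = (K[\bx]\vc)_i = y_i$ using the definition $(K[\bx])_{j,k}=K(x_k,x_j)$). It then remains to show $\|K[\bx]^{-1}\vy\|_1\le\|\mu\|$. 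This is the heart of the matter, and it is precisely the content of the optimality analysis in \cite{SongZhang2011a} for $\cB_1$: since $\cB_1\subseteq\cB$ is norm-closed under these constraints and Lemma~\ref{representercb1} already gives that $K^\bx(\cdot)(K[\bx]^{-1}\vy)$ is the minimal-norm interpolant \emph{within} $\cB_1$, one needs the stronger fact that it is minimal within all of $\cB$. I would obtain this by a duality argument: the norm of the optimal interpolant equals $\sup\{\,\vy^T\va : \|\va^TK_\bx(\cdot)\|_{C_0(X)}\le1\,\}$ (this is the dual characterization of minimal norm interpolation in the pre-RKBS $\cB$, where the annihilator of the interpolation constraints is exactly $\overline{\span}\{K(\cdot,x):x\in X\}=C_0(X)$ cut down appropriately), and under \eqref{H2condition} the identity \eqref{anorm} converts this to $\sup\{\vy^T\va:\|\va^TK[\bx]\|_\infty\le1\}=\|K[\bx]^{-1}\vy\|_1$ by the standard $\ell^1$--$\ell^\infty$ duality after the substitution $\vb=K[\bx]^T\va$. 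Thus $\|\mu\|\ge\|K[\bx]^{-1}\vy\|_1$ for every feasible $\mu$, proving that $K^\bx(\cdot)(K[\bx]^{-1}\vy)$ solves \eqref{mniproblem}.

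The second step is the ``only if'' direction, which is easier. Suppose \eqref{H2condition} fails, so there is some $x_0\in X$ with $\|K[\bx]^{-1}K_\bx(x_0)\|_1>1$. Then $K^\bx(\cdot)(K[\bx]^{-1}K_\bx(x_0))$ and the function $K(x_0,\cdot)$ (i.e.\ $f_\nu$ with $\nu=\delta_{x_0}$) agree at all the sampling points $x_j$ — indeed $K^\bx(x_i)K[\bx]^{-1}K_\bx(x_0) = (K[\bx]K[\bx]^{-1}K_\bx(x_0))_i = K(x_0,x_i)$ — but the former has $\cB$-norm $\|K[\bx]^{-1}K_\bx(x_0)\|_1>1$ while the latter has $\cB$-norm $\|\delta_{x_0}\|=1$. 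Hence for the data $\vy:=K_\bx(x_0)$ no minimizer of \eqref{mniproblem} can be of the form $K^\bx(\cdot)\vc$: any such $\vc$ must equal $K[\bx]^{-1}\vy=K[\bx]^{-1}K_\bx(x_0)$ to satisfy the interpolation constraints (nonsingularity of $K[\bx]$), but that vector gives a strictly larger norm than the competitor $K(x_0,\cdot)\in\cB$. By Lemma~\ref{mni}, $\cB$ fails the linear representer theorem.

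I expect the main obstacle to be the ``if'' direction, specifically establishing that the $\cB_1$-optimal interpolant remains optimal in the larger space $\cB$. The cleanest route is the duality computation sketched above; the one point requiring care is justifying the dual formula for minimal norm interpolation in $\cB$ and identifying the relevant constraint set — this uses that $\cB\cong\cM(X)=C_0(X)^*$ isometrically and that the predual $C_0(X)$ is spanned by the kernel sections, so that the functional $\mu\mapsto f_\mu(x_j)$ is weak-$*$ continuous and the feasible set for the dual problem is exactly $\{\va\in\bR^m:\|\sum_j a_j K(x_j,\cdot)\|_{C_0(X)}\le1\}$. Everything else is bookkeeping with $K[\bx]$ and $\ell^1$--$\ell^\infty$ duality.
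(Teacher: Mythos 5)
Your proposal is correct. The ``if'' direction is essentially the paper's own argument: from the identity $\int_X \vc^T K_\bx(t)\,d\mu(t)=\vc^T\vy$ for every feasible $\mu$ one gets $|\vc^T\vy|\le\|\vc^TK_\bx(\cdot)\|_{C_0(X)}\|\mu\|$, and then \eqref{anorm} plus $\ell^1$--$\ell^\infty$ duality yields $\|\mu\|\ge\|K[\bx]^{-1}\vy\|_1=\|K^\bx(\cdot)K[\bx]^{-1}\vy\|_\cB$; note that you only ever need this one-sided inequality (which follows directly from \eqref{functionalonc0}), not the full dual equality you flag as the ``main obstacle,'' since the candidate interpolant attains the bound. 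Where you genuinely diverge is the ``only if'' direction. The paper argues abstractly: if $\cB$ satisfies the linear representer theorem, the minimizer $K^\bx(\cdot)\vc$ of \eqref{mniproblem} lies in $\cB_1$ and is therefore also a minimizer of the $\cB_1$-problem \eqref{mni2}, whence \eqref{H2condition} follows from Lemma~\ref{representercb1}. You instead give a direct, constructive refutation: if $\|K[\bx]^{-1}K_\bx(x_0)\|_1>1$ for some $x_0$ (necessarily not a sampling point), take $\vy=K_\bx(x_0)$; nonsingularity of $K[\bx]$ forces the unique kernel-section interpolant to be $K^\bx(\cdot)K[\bx]^{-1}K_\bx(x_0)$ with norm exceeding $1$, while the Dirac competitor $K(x_0,\cdot)=f_{\delta_{x_0}}$ interpolates the same data with norm $1$. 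Your version is self-contained (it does not invoke the ``only if'' half of Lemma~\ref{representercb1}) and exhibits an explicit witness; the paper's version is shorter because it outsources the work to the cited lemma. Both are valid proofs of the theorem.
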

\begin{proof}
Suppose that (\ref{H2condition}) holds true. By Lemma \ref{mni}, to show that $\cB$ satisfies the linear representer theorem, it suffices to show that $f_0=K^\bx(\cdot)K[\bx]^{-1}\vy$ is a minimizer of (\ref{mniproblem}). Clearly, $f_0(\bx)=\vy$. Let $f_\mu$, $\mu\in\cM(X)$, be an arbitrary function in $\cB$ that satisfies the interpolation condition $f_\mu(\bx)=\vy$. We then have for all $\vc\in\bR^m$ that
$$
\int_X \vc^T K_\bx(t)d\mu(t)=\int_X \sum_{j=1}^m c_jK(t,x_j)d\mu(t)=\sum_{j=1}^m c_j f_\mu(x_j)=\vc^T \vy.
$$
It follows from (\ref{functionalonc0}) that for all $\vc\in \bR^m$
$$
|\vc^T\vy|\le \|\vc^T K_\bx(\cdot)\|_{C_0(X)} \| \mu\|.
$$
This together with (\ref{anorm}) implies that
$$
\|\mu\|\ge \sup_{\vc\in\bR^m,\vc\ne0}\frac{|\vc^T \vy|}{\|\vc^T K_\bx(\cdot)\|_{C_0(X)}}=\sup_{\vc\in\bR^m,\vc\ne0}\frac{|\vc^T \vy|}{\|\vc^TK[\bx]\|_{\infty}}=\sup_{\va\in\bR^m,\va\ne0}\frac{|\va^TK[\bx]^{-1} \vy|}{\|\va\|_{\infty}}=\|K[\bx]^{-1}\vy\|_1.
$$
Now, recall by (\ref{finitenorm}) that $\|f_0\|_\cB=\|K[\bx]^{-1}\vy\|_1$ and by definition of $\| \cdot\|_\cB$ that $\|f_\mu\|_\cB=\|\mu\|$. These two facts combined with the above inequality imply that $\|f_\mu\|_\cB\ge \|f_0\|_\cB$. Thus, $f_0$ is indeed a minimizer of (\ref{mniproblem}).

On the other hand, suppose that $\cB$ satisfies the linear representer theorem and we want to prove (\ref{H2condition}). Let $\vy\in\bR^m$. By Lemma \ref{mni}, the minimal norm interpolation (\ref{mniproblem}) has a minimizer $f_0$ of the form $f_0=K^\bx(\cdot)\vc$ for some $\vc\in\bR^m$. Clearly, $f_0$ is also a minimizer of (\ref{mni2}) because $f_0\in\cB_1$ and
$$
\|f_0\|_{\cB_1}=\|f_0\|_\cB=\inf\{\|f\|_\cB:\ f\in\cB,\ f(\bx)=\vy\}\le \inf\{\|f\|_{\cB_1}:f\in\cB_1,\ f(\bx)=\vy\}.
$$
By Lemma \ref{representercb1}, (\ref{H2condition}) holds true. The proof is complete.
\end{proof}

It will become clear in the next section that the above theorem makes $\cB$ a useful space for error analysis of the $\ell^1$-regularized least square regression.

We present two examples of $K$ that satisfy all the assumptions, especially (\ref{H2condition}), in this section:
\begin{itemize}
\item[--]the exponential kernel
$$
K(s,t):=\me^{-|s-t|}, \quad s,t\in \R,
$$
\item[--]the Brownian bridge kernel
\begin{equation*}
K(s,t):= \min\{s,t \} -st, \quad s,t\in (0,1).
\end{equation*}
\end{itemize}
That these two kernels satisfy (\ref{H2condition}) has been proved in \cite{SongZhang2011a}. It remains to verify the denseness requirement (\ref{denseness}). The exponential kernel is a particular case of the following result.

\begin{prop}\label{translationinvariant}
If $\phi$ is Lebesgue integrable on $\bR^d$ that is nonzero almost everywhere then the function
\begin{equation}\label{fouriertransform}
K(\vs,\vt):=\int_{\bR^d}e^{-i(\vs-\vt)\cdot\vxi}\phi(\vxi)d\vxi,\ \ \vs,\vt\in\bR^d
\end{equation}
satisfies that $K(\cdot,\vt)\in C_0(\bR^d)$ for all $\vt\in \bR^d$ and the denseness condition (\ref{denseness}). So does $K(\vs,\vt):=\psi(\vs-\vt)$, $\vs,\vt\in\bR^d$ where $\psi$ is a nontrivial continuous function on $\bR^d$ of compact support.
\end{prop}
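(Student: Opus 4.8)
The plan is to identify $C_0(\bR^d)^\ast$ with $\cM(\bR^d)$, reduce the denseness condition \eqref{denseness} to the statement that no nonzero Borel measure annihilates all the functions $K(\cdot,\vt)$, and then extract $\mu=0$ from injectivity of the Fourier transform. First, the regularity claim. Writing $\widehat\phi(\vu):=\int_{\bR^d}e^{-i\vu\cdot\vxi}\phi(\vxi)\,d\vxi$, one has $K(\vs,\vt)=\widehat\phi(\vs-\vt)$, so $K(\cdot,\vt)$ is a translate of $\widehat\phi$; since $\phi\in L^1(\bR^d)$, the Riemann--Lebesgue lemma gives $\widehat\phi\in C_0(\bR^d)$, whence $K(\cdot,\vt)\in C_0(\bR^d)$ for every $\vt$. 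For the second kernel, $\psi$ is continuous with compact support, hence lies in $L^1(\bR^d)$, and $K(\cdot,\vt)=\psi(\cdot-\vt)$ is continuous with compact support, in particular in $C_0(\bR^d)$.

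For denseness, by the Hahn--Banach theorem and \eqref{functionalonc0} it suffices to show that the only $\mu\in\cM(\bR^d)$ with $\int_{\bR^d}K(\vs,\vt)\,d\mu(\vs)=0$ for all $\vt\in\bR^d$ is $\mu=0$. Put $\widehat\mu(\vxi):=\int_{\bR^d}e^{-i\vs\cdot\vxi}\,d\mu(\vs)$, a bounded continuous function. For the kernel \eqref{fouriertransform}, substitute the integral formula for $K$ and apply Fubini's theorem, which is legitimate because $\iint_{\bR^d\times\bR^d}|\phi(\vxi)|\,d\vxi\,d|\mu|(\vs)=\big(\int_{\bR^d}|\phi|\big)\|\mu\|<\infty$; the annihilation condition becomes
\[
\int_{\bR^d}e^{i\vt\cdot\vxi}\,\phi(\vxi)\,\widehat\mu(\vxi)\,d\vxi=0\qquad\text{for all }\vt\in\bR^d .
\]
Thus $\phi\,\widehat\mu\in L^1(\bR^d)$ has identically vanishing Fourier transform and is therefore zero almost everywhere; since $\phi\neq0$ a.e., $\widehat\mu=0$ a.e., hence $\widehat\mu\equiv0$ by continuity, hence $\mu=0$ by the uniqueness theorem for Fourier transforms of finite Borel measures.

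For the compactly supported kernel $K(\vs,\vt)=\psi(\vs-\vt)$, set $\check\psi(\vu):=\psi(-\vu)$, so that $\int_{\bR^d}\psi(\vs-\vt)\,d\mu(\vs)=(\mu\ast\check\psi)(\vt)$. The annihilation condition says $\mu\ast\check\psi\equiv0$; since $\check\psi\in L^1(\bR^d)$ and $\mu$ is finite, $\mu\ast\check\psi\in L^1(\bR^d)$ with $\widehat{\mu\ast\check\psi}=\widehat\mu\cdot\widehat{\check\psi}$, so $\widehat\mu(\vxi)\,\widehat{\check\psi}(\vxi)=0$ for all $\vxi$. Here the decisive point is that $\widehat{\check\psi}$, being the Fourier transform of a compactly supported $L^1$ function, extends to an entire function (Paley--Wiener), and since $\psi\not\equiv0$ it is not identically zero, so its zero set has Lebesgue measure zero. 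Therefore $\widehat\mu=0$ a.e., and as before $\widehat\mu\equiv0$ and $\mu=0$.

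The routine ingredients are the duality recorded in \eqref{functionalonc0}, Fubini's theorem, and the convolution identity $\widehat{\mu\ast f}=\widehat\mu\,\widehat f$; I expect the only step that needs more than softness to be the assertion that $\widehat{\check\psi}\neq0$ almost everywhere in the compact-support case, where Paley--Wiener (equivalently, real-analyticity of the Fourier transform of a function of compact support) takes the place of the Riemann--Lebesgue lemma used for the first kernel. One should also recall the uniqueness theorem $\widehat\mu\equiv0\Rightarrow\mu=0$ for $\mu\in\cM(\bR^d)$, obtained for instance by convolving $\mu$ with a Gaussian approximate identity and passing to the limit. Finally, if one insists that $K$ be real-valued and works with real signed measures throughout, the Fourier-analytic steps go through verbatim after complexifying.
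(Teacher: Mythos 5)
Your proof is correct, and it supplies in full the argument that the paper only sketches: the paper handles the $C_0$ claim exactly as you do (Riemann--Lebesgue, respectively compact support), and for the denseness condition it merely defers to the arguments of \cite{SongZhang2011a}, which are precisely the duality-plus-Fourier-uniqueness route you carry out. Your reduction via \eqref{functionalonc0} and Hahn--Banach to showing that no nonzero $\mu\in\cM(\bR^d)$ annihilates all $K(\cdot,\vt)$, the Fubini computation giving $\phi\,\widehat\mu=0$ a.e., and the Paley--Wiener/real-analyticity argument for the compactly supported kernel (the one genuinely non-soft step, which you correctly isolate) are all sound. The closing remark about real scalars is apt: the paper's $C_0(X)$ and $\cM(X)$ are real, and indeed $K$ in \eqref{fouriertransform} is real-valued only under a symmetry hypothesis on $\phi$ that the statement leaves implicit; complexifying as you indicate resolves this without affecting the argument.
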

\begin{proof}
That the function given by (\ref{fouriertransform}) belongs to $C_0(\bR^d)$ for all $\vt\in\bR^d$ follows from the Riemann-Lebesgue lemma. The denseness condition (\ref{denseness}) for the two kernels can be proved by arguments similar to those in \cite{SongZhang2011a}.
\end{proof}

The Brownian bridge kernel is handled with a manner different from that in \cite{SongZhang2011a}.

\begin{prop}\label{brownian}
The Brownian bridge kernel satisfies (\ref{denseness}).
\end{prop}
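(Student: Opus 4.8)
My plan is to exploit the explicit piecewise‑linear structure of the Brownian bridge kernel together with a standard spline‑approximation argument, rather than the measure‑annihilator route. The first step is to record the shape of the generating functions: for a fixed $s\in(0,1)$ one computes $K(x,s)=x(1-s)$ when $x\le s$ and $K(x,s)=s(1-x)$ when $x\ge s$, so $K(\cdot,s)$ is the continuous ``tent'' function that is affine on $[0,s]$ and on $[s,1]$ and vanishes at the endpoints $0$ and $1$. In particular $K(\cdot,s)\in C_0((0,1))$, where I identify $C_0((0,1))$ with $\{f\in C[0,1]:f(0)=f(1)=0\}$ (a function on $(0,1)$ vanishing at infinity is exactly one extending continuously to $[0,1]$ with value $0$ at both ends).

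Next I would pin down finite spans. Fix pairwise distinct $0<s_1<\ldots<s_k<1$, put $s_0:=0$ and $s_{k+1}:=1$, and let $V$ be the space of all $g\in C[0,1]$ that vanish at $0$ and $1$ and are affine on each $[s_{i-1},s_i]$; such a $g$ is uniquely determined by, and can realize, any prescribed values $g(s_1),\ldots,g(s_k)$, so $\dim V=k$. Each $K(\cdot,s_i)$ lies in $V$, and $K(\cdot,s_1),\ldots,K(\cdot,s_k)$ are linearly independent: a relation $\sum_i c_iK(\cdot,s_i)\equiv 0$ evaluated at $s_1,\ldots,s_k$ gives $K[\bx]\vc=0$ with $\bx=(s_1,\ldots,s_k)$ (using symmetry of $K$), forcing $\vc=0$ by the nonsingularity of $K[\bx]$ assumed throughout this section. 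Hence $\span\{K(\cdot,s_1),\ldots,K(\cdot,s_k)\}=V$: every continuous piecewise‑linear function with breakpoints among the $s_i$ that vanishes at $0,1$ is a linear combination of these kernel sections.

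Finally I would invoke uniform approximation by piecewise‑linear splines. Given $f\in C_0((0,1))$ and $\varepsilon>0$, uniform continuity of $f$ on $[0,1]$ provides a partition $0=s_0<s_1<\ldots<s_{k+1}=1$ fine enough that the piecewise‑linear interpolant $f_\varepsilon$ of $f$ at these nodes satisfies $\|f-f_\varepsilon\|_{C_0((0,1))}<\varepsilon$; since $f(0)=f(1)=0$, $f_\varepsilon$ vanishes at the endpoints, so $f_\varepsilon\in V\subseteq\span\{K(\cdot,x):x\in(0,1)\}$ by the previous step. Letting $\varepsilon\downarrow 0$ shows that $f$ lies in $\overline{\span}\{K(\cdot,x):x\in(0,1)\}$, which is exactly \eqref{denseness}. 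None of these steps is genuinely hard; the only places where the hypotheses enter are the linear independence of the tent functions (via nonsingularity of the kernel matrix) and the preliminary check that $K(\cdot,s)\in C_0((0,1))$, so that the construction of $\cB$ and the statement are meaningful. For completeness I might also note the alternative dual argument: a $\mu\in\cM((0,1))$ annihilating every $K(\cdot,x)$ would satisfy $\int_0^x\mu((u,1))\,du=x\int_{(0,1)}t\,d\mu(t)$ for all $x$, so differentiating forces $\mu((x,1))$ to equal a constant for a.e.\ $x$, and then the right‑continuity of $x\mapsto\mu((0,x])$ together with $\mu((0,x])\to 0$ as $x\to 0^+$ forces $\mu=0$; but the constructive route above is cleaner.
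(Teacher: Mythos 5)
Your argument is correct, and it takes a genuinely different route from the paper. The paper proves \eqref{denseness} by duality: since the dual of $C_0(X)$ is $\cM(X)$, density of the span is equivalent to the statement that any $\nu\in\cM((0,1))$ annihilating every $K(\cdot,t)$ is zero, and the paper establishes this using the factorization $K(s,t)=\int_0^1\Gamma_s(z)\Gamma_t(z)\,dz$ with $\Gamma_s=\chi_{(0,s)}-s$ to force $\nu((s_1,s_2))=0$ for all subintervals. (Your closing ``alternative dual argument'' is essentially this proof, with the kernel of $L^1$-functions $\chi_{\{s>u\}}$ in place of $\Gamma_s$.) Your main argument is instead constructive: you observe that the kernel sections are tent functions, that the span of $K(\cdot,s_1),\ldots,K(\cdot,s_k)$ is exactly the $k$-dimensional space of continuous piecewise-linear functions with knots at the $s_i$ vanishing at $0$ and $1$, and then you invoke uniform approximation of $C_0((0,1))$ by such splines. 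This is more elementary and self-contained --- it avoids the Riesz representation theorem and the appeal to ``arguments similar to those in \cite{SongZhang2011a}'', and it even gives a quantitative rate of approximation --- at the cost of being tied to the explicit piecewise-linear form of this particular kernel, whereas the annihilator method generalizes to other integral representations. One small remark: your linear-independence step leans on the standing nonsingularity assumption on $K[\bx]$; this is legitimate (the Brownian bridge kernel is strictly positive definite), but you could make the step entirely elementary by noting that the derivative of $\sum_i c_iK(\cdot,s_i)$ jumps by $-c_j$ at $s_j$, so the sum vanishes identically only if all $c_j=0$.
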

\begin{proof}
Clearly, for the Brownian bridge kernel, $K(\cdot,t)$ is continuous for all $t\in (0,1)$. Let $\nu$ be a Borel measure on $X:=(0,1)$ such that
\begin{equation}\label{brownianeq1}
\int_X K(s,t)d\nu(s)=0\ \mbox{ for all }t\in(0,1).
\end{equation}
Note that $K$ has the representation
$$
K(s,t)=\int_X \Gamma_s(z)\Gamma_t(z)dz,\ \ s,t\in(0,1),
$$
where $\Gamma_s:=\chi_{(0,s)}-s$ with $\chi_{(0,s)}$ denoting the characteristic function of $(0,s)$. Arguments similar to those in \cite{SongZhang2011a} yield that there exists a constant $C$ such that
$$
\int_0^s d\nu(s)=C\mbox{ for all }s\in(0,1).
$$
It follows that $\nu((s_1,s_2))=0$ for all $0<s_1<s_2<1$. Consequently, $\nu$ is the zero Borel measure on $(0,1)$. Thus, the Brownian bridge kernel satisfies (\ref{denseness}).
\end{proof}

Finally, we remark that the function $K$ can be regarded as the reproducing kernel for $\cB$ constructed by (\ref{spacecb}). To see this, we introduce a bilinear form on $\cB\times C_0(X)$ by setting
$$
\langle f_\mu,g\rangle:=\int_X g(x)d\mu(x)\ \mbox{ for all }\mu\in\cM(X)\mbox{ and }g\in C_0(X).
$$
We observe by (\ref{functionalonc0}) that
$$
|\langle f_\mu,g\rangle|\le \|\mu\|\|g\|_{C_0(X)}=\|f_\mu\|_{\cB}\|g\|_{C_0(X)}
$$
and that for all $x\in X$,
$$
f_\mu(x)=\langle f_\mu,K(\cdot,x)\rangle,\ \ g(x)=\langle K(x,\cdot),g\rangle.
$$
In the above senses, $K$ is said to be the reproducing kernel for both $\cB$ and $C_0(X)$.

\section{Error Analysis of the $\ell^1$-Regularization}
\setcounter{equation}{0}

We apply the constructed space $\cB$ to estimate the learning rate of the $\ell^1$-regularized least square regression (\ref{regularization2}) in this section. To this end, we first introduce some standard assumptions in the literature imposed on the regression function $f_\rho$, the input space $X$ and the function $K$.

Let $X$ be compact metric space with the distance $d$ and assume that $\rho_X$ is a Borel probability measure on $X$. In this note, we suppose that $K$ is a positive-definite reproducing kernel on $X$ with the Lipschitz condition
\begin{equation}\label{lip}
|K(x,t)-K(x,t')|\le C_\alpha (d(t,t'))^\alpha\mbox{ for some positive constants }\alpha,C_\alpha\mbox{ and for all }x,t,t'\in X.
\end{equation}
Denote for all $r>0$ by $\cN(X,r)$ the least number of open balls with radius $r$ that cover $X$. Assume that this covering number satisfies for some positive constants $\eta,C_\eta$ that
\begin{equation}\label{covering}
\cN(X,r)\le \frac{C_\eta}{r^\eta}\ \mbox{ for all }0<r\le 1.
\end{equation}
The requirement on $f_\rho$ is that it is contained in the range $\ran(L_K^s)$ of $L_K^s$ for some $s>0$. Here, $L_K$ is the compact positive operator on $L^2_{\rho_X}$ defined by
$$
L_Kf:=\int_{X}K(t,\cdot)f(t)d\rho_X(t),\ \ f\in L^2_{\rho_X}.
$$
Let $\phi_j$, $j\in\bN$ be an orthonormal basis for $L^2_{\rho_X}$ consisting of eigenfunctions of $L_K$ with the corresponding eigenvalues $\lambda_j\ge \lambda_{j+1}$, $j\in\bN$. The assumption $f_\rho\in\ran(L_K^s)$ implies that
$$
f_\rho=\sum_{j=1}^\infty \lambda^s_j a_j\phi_j
$$
for some $h=\sum_{j=1}^\infty a_j\phi_j$ in $L^2_{\rho_X}$. In order to make use of the space constructed in the last section, our last requirement is that $K$ satisfies that $\overline{\span}\{K(\cdot,x):x\in X\}=C(X)$ and condition (\ref{H2condition}).

Let $\vc_{\bz,\lambda}$ be a minimizer of (\ref{regularization2}) and let $f_{\bz,\lambda}$ be given by \eqref{fzlambda}. For the minimization problem (\ref{regularization2}), the hypothesis error and regularization error have the specific forms
$$
\begin{array}{ll}
\cP(\bz,\lambda,g)&:=\left(\cE_{\bz}(f_{\bz,\lambda})+\lambda \|f_{\bz,\lambda}\|_\cB\right)-\left(\cE_\bz(g)+\lambda \|g\|_\cB\right),\\
\cD(\lambda,g)&:=\cE(g)-\cE(f_\rho)+\lambda \|g\|_\cB,
\end{array}
$$
where $g$ is a function in $\cB$ to be carefully chosen.

The use of the space $\cB$ enables us to discard the hypothesis error immediately.
\begin{lem}\label{twoterms}
Under the above assumptions on $K$, there holds $\cE(f_{\bz,\lambda})-\cE(f_\rho)\le \cS(\bz,\lambda,g)+\cD(\lambda,g)$ for all $g\in \cB$.
\end{lem}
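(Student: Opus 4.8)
The plan is to use the error decomposition \eqref{boundbythreeerrors}, which with $\psi(t)=t$ reads
\[
\cE(f_{\bz,\lambda})-\cE(f_\rho)=\cS(\bz,\lambda,g)+\cP(\bz,\lambda,g)+\cD(\lambda,g)-\lambda\|f_{\bz,\lambda}\|_\cB ,
\]
and to show that the term $\cP(\bz,\lambda,g)-\lambda\|f_{\bz,\lambda}\|_\cB$ (equivalently $\cE_\bz(f_{\bz,\lambda})-\cE_\bz(g)-\lambda\|g\|_\cB$) is nonpositive. The key observation is that, under the standing assumptions on $K$ — in particular condition \eqref{H2condition} and the denseness of $\span\{K(\cdot,x):x\in X\}$ in $C(X)$ — Theorem \ref{representercb} applies, so $\cB$ satisfies the linear representer theorem. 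Applying it to the continuous nonnegative loss $Q(f(\bx)):=\cE_\bz(f)=\frac1m\sum_{j=1}^m(f(x_j)-y_j)^2$ and the regularizer $\psi(t)=t$ (which indeed satisfies $\lim_{t\to\infty}\psi(t)=+\infty$), we conclude that the problem
\[
\inf_{f\in\cB}\bigl(\cE_\bz(f)+\lambda\|f\|_\cB\bigr)
\]
has a minimizer of the form $K^\bx(\cdot)\vc$ for some $\vc\in\bR^m$.

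The second step is to identify that minimizer with $f_{\bz,\lambda}$. By \eqref{finitenorm}, for any $\vc\in\bR^m$ we have $\|K^\bx(\cdot)\vc\|_\cB=\|\vc\|_1$ and $\bigl(K^\bx(\cdot)\vc\bigr)(x_j)=K^\bx(x_j)\vc$, so the restriction of the functional $f\mapsto\cE_\bz(f)+\lambda\|f\|_\cB$ to functions of the form $K^\bx(\cdot)\vc$ is exactly $\frac1m\sum_{j=1}^m|K^\bx(x_j)\vc-y_j|^2+\lambda\|\vc\|_1$, which is the objective in \eqref{regularization2}. Since $\vc_{\bz,\lambda}$ minimizes the latter over $\bR^m$, the function $f_{\bz,\lambda}=K^\bx(\cdot)\vc_{\bz,\lambda}$ attains the value $\inf_{\vc\in\bR^m}\bigl(\cE_\bz(K^\bx(\cdot)\vc)+\lambda\|K^\bx(\cdot)\vc\|_\cB\bigr)$, which by the representer theorem equals $\inf_{f\in\cB}\bigl(\cE_\bz(f)+\lambda\|f\|_\cB\bigr)$. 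Hence
\[
\cE_\bz(f_{\bz,\lambda})+\lambda\|f_{\bz,\lambda}\|_\cB\le \cE_\bz(g)+\lambda\|g\|_\cB\quad\text{for every }g\in\cB,
\]
i.e.\ $\cP(\bz,\lambda,g)\le 0$.

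The final step is bookkeeping: dropping the nonnegative term $\lambda\|f_{\bz,\lambda}\|_\cB$ and the nonpositive term $\cP(\bz,\lambda,g)$ from \eqref{boundbythreeerrors} yields $\cE(f_{\bz,\lambda})-\cE(f_\rho)\le\cS(\bz,\lambda,g)+\cD(\lambda,g)$ for all $g\in\cB$, exactly as in the RKHS case \eqref{justtwoterms}. I expect no serious obstacle here; the only point requiring a little care is the verification that the hypotheses of Theorem \ref{representercb} are genuinely in force under the assumptions collected at the start of Section 3 — namely that $C(X)=C_0(X)$ since $X$ is compact, that the nonsingularity of all kernel matrices $K[\bx]$ holds because $K$ is a positive-definite reproducing kernel, and that \eqref{denseness} and \eqref{H2condition} are explicitly assumed — so that the linear representer theorem may be invoked and the identification of $f_{\bz,\lambda}$ with the global minimizer over $\cB$ goes through cleanly.
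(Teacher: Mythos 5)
Your proof is correct and follows essentially the same route as the paper: invoke Theorem \ref{representercb} to get the linear representer theorem, identify $f_{\bz,\lambda}$ as a global minimizer of $\cE_\bz(f)+\lambda\|f\|_\cB$ over $\cB$, conclude $\cP(\bz,\lambda,g)\le 0$, and drop the nonpositive terms in \eqref{boundbythreeerrors}. The paper's proof is just a terser version of the same argument; your extra care in identifying the finite-dimensional minimizer with the global one via \eqref{finitenorm} is a detail the paper leaves implicit.
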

\begin{proof}
By Theorem \ref{representercb},
$$
f_{\bz,\lambda}=\arg\min_{f\in\cB}\cE_\bz(f)+\lambda \|f\|_\cB.
$$
As a consequence, $\cP(\bz,\lambda,g)\le0$, which together with inequality (\ref{boundbythreeerrors}) completes the proof.
\end{proof}

We next estimate the regularization error.

\begin{lem}\label{regularizationerror}
If $0<s< 1$ then
\begin{equation}\label{dlambda1s}
\inf_{g\in\cB}\cD(\lambda,g)\le (\|h\|_{L^2_{\rho_X}}+\|h\|_{L^2_{\rho_X}}^2)\lambda^{\frac{2s}{1+s}}.
\end{equation}
If $s\ge1$ then $f_\rho\in\cB$ and
\begin{equation}\label{dlambdas1}
\cD(\lambda,f_\rho)\le (\lambda_1^{s-1}\|h\|_{L^2_{\rho_X}})\lambda.
\end{equation}
\end{lem}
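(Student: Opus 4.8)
The plan is to reduce the regularization error to a balance between an $L^2_{\rho_X}$-approximation term and the $\cB$-norm, and to produce good approximants to $f_\rho$ inside $\cB$ by applying $L_K$ to truncated eigenexpansions. First I would use the Pythagorean identity \eqref{orthogonal} to rewrite, for every $g\in\cB$,
$$\cD(\lambda,g)=\|g-f_\rho\|_{L^2_{\rho_X}}^2+\lambda\|g\|_\cB .$$
The one structural fact that drives the whole argument is that $L_K$ maps $L^2_{\rho_X}$ into $\cB$ with norm at most one: for $w\in L^2_{\rho_X}$ the measure $\mu$ given by $d\mu:=w\,d\rho_X$ has total variation $\|\mu\|=\|w\|_{L^1_{\rho_X}}\le\|w\|_{L^2_{\rho_X}}$ (since $\rho_X$ is a probability measure), and $L_Kw=f_\mu$, so $L_Kw\in\cB$ with $\|L_Kw\|_\cB\le\|w\|_{L^2_{\rho_X}}$. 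The point worth stressing is that $\|\cdot\|_\cB$ is here a total-variation ($\ell^1$-type) norm, hence controlled by $\|w\|_{L^1_{\rho_X}}$ and a fortiori by $\|w\|_{L^2_{\rho_X}}$; this is exactly the feature that produces the sharper regularization-error rate, and it is the step I would treat as the crux.

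For $s\ge1$ the conclusion is then immediate. Since $s-1\ge0$, the operator $L_K^{s-1}$ is bounded on $L^2_{\rho_X}$ with operator norm $\lambda_1^{s-1}$, so $L_K^{s-1}h\in L^2_{\rho_X}$; writing $f_\rho=L_K(L_K^{s-1}h)$ and invoking the embedding above gives $f_\rho\in\cB$ with $\|f_\rho\|_\cB\le\lambda_1^{s-1}\|h\|_{L^2_{\rho_X}}$. Taking $g=f_\rho$ annihilates the approximation term, so $\cD(\lambda,f_\rho)=\lambda\|f_\rho\|_\cB$, which is \eqref{dlambdas1}.

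For $0<s<1$ I would truncate the spectral expansion at a threshold $t>0$ to be optimized. Set $w:=\sum_{j:\lambda_j\ge t}\lambda_j^{s-1}a_j\phi_j$, which lies in $L^2_{\rho_X}$ since $\sum_{\lambda_j\ge t}\lambda_j^{2(s-1)}a_j^2\le t^{2(s-1)}\|h\|_{L^2_{\rho_X}}^2<\infty$, and put $g:=L_Kw=\sum_{\lambda_j\ge t}\lambda_j^{s}a_j\phi_j\in\cB$. Then, using $2s>0$ on the discarded tail and $2(s-1)<0$ on the retained part,
$$\|g-f_\rho\|_{L^2_{\rho_X}}^2=\sum_{\lambda_j<t}\lambda_j^{2s}a_j^2\le t^{2s}\|h\|_{L^2_{\rho_X}}^2,\qquad \|g\|_\cB\le\|w\|_{L^2_{\rho_X}}\le t^{s-1}\|h\|_{L^2_{\rho_X}} .$$
Hence $\cD(\lambda,g)\le t^{2s}\|h\|_{L^2_{\rho_X}}^2+\lambda t^{s-1}\|h\|_{L^2_{\rho_X}}$, and the choice $t=\lambda^{1/(1+s)}$ makes $t^{2s}=\lambda^{2s/(1+s)}$ and $\lambda t^{s-1}=\lambda^{2s/(1+s)}$ at once, so that $\cD(\lambda,g)\le(\|h\|_{L^2_{\rho_X}}+\|h\|_{L^2_{\rho_X}}^2)\lambda^{2s/(1+s)}$. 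Taking the infimum over $g\in\cB$ gives \eqref{dlambda1s}. The only routine points to check are the membership $w\in L^2_{\rho_X}$ (done above, and in fact the sum is finite since $\lambda_j\to0$) and that the degenerate regime $t>\lambda_1$, in which $w=0$ and $g=0$, is still covered, since then $\|f_\rho\|_{L^2_{\rho_X}}^2=\sum_j\lambda_j^{2s}a_j^2\le\lambda_1^{2s}\|h\|_{L^2_{\rho_X}}^2\le t^{2s}\|h\|_{L^2_{\rho_X}}^2$.
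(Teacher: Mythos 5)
Your proof is correct and follows essentially the same route as the paper: the key embedding $\|L_Kw\|_\cB\le\|w\|_{L^1_{\rho_X}}\le\|w\|_{L^2_{\rho_X}}$, the choice $g=f_\rho=L_K(L_K^{s-1}h)$ for $s\ge1$, and a spectral truncation balanced at $\lambda^{1/(1+s)}$ for $0<s<1$. The only cosmetic difference is that you parametrize the truncation by a threshold $t$ on the eigenvalues rather than by an index $N$ with $\lambda_{N+1}<\lambda^{1/(1+s)}\le\lambda_N$, and your degenerate regime $t>\lambda_1$ matches the paper's case $\lambda_1\le\lambda^{1/(1+s)}$ where $g=0$ is taken.
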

\begin{proof}
Firstly, we have for each $\varphi\in L^2_{\rho_X}$ that $L_K\varphi\in \cB$ and by the Cauchy-Schwartz inequality that
\begin{equation}\label{regularizationerroreq1}
\|L_K\varphi\|_\cB= \|\varphi\|_{L^1_{\rho_X}}\le \|\varphi\|_{L^2_{\rho_X}}.
\end{equation}
If $s\ge1$ then $f_\rho=L_K\varphi$ where
$$
\varphi=\sum_{j=1}^\infty \lambda_j^{s-1}a_j\phi_j.
$$
As $\lambda_j$ is non-increasing,
$$
\|\varphi\|_{L^2_{\rho_X}}\le \lambda_1^{s-1}\biggl(\sum_{j=1}^\infty|a_j|^2\biggr)^{1/2}=\lambda_1^{s-1}\|h\|_{\lx}.
$$
We then get by the above equation and (\ref{regularizationerroreq1}) that
$$
\cD(\lambda,f_\rho)=\lambda\|f_\rho\|_\cB\le\lambda\|\varphi\|_{\lx}\le\lambda \lambda_1^{s-1}\|h\|_{\lx},
$$
which is (\ref{dlambdas1}).

Suppose now that $0<s<1$. If $\lambda_1\le\lambda^{\frac1{1+s}}$ then by (\ref{orthogonal}),
$$
\cD(\lambda,0)=\cE(0)-\cE(f_\rho)=\|f_\rho\|_{\lx}^2=\sum_{j=1}^\infty \lambda_j^{2s}a_j^2\le \lambda^{\frac{2s}{1+s}}\|h\|_{\lx}^2,
$$
which implies (\ref{dlambda1s}). If $\lambda_1>\lambda^{\frac1{1+s}}$ then since $\lambda_j$ decreases to zero as $j$ tends to infinity, there exists some $N\in\bN$ such that $\lambda_{N+1}<\lambda^{\frac1{1+s}}\le \lambda_N$. Put
$$
\varphi:=\sum_{j=1}^N\lambda_j^{s-1}a_j\phi_j.
$$
It follows from (\ref{orthogonal}) and (\ref{regularizationerroreq1}) that
$$
\cD(\lambda,L_K\varphi)\le \|L_K\varphi-f_\rho\|_{\lx}^2+\lambda \|\varphi\|_{L^2_{\rho_X}}.
$$
We estimate that
$$
\lambda\|\varphi\|_{L^2_{\rho_X}}=\lambda\biggl(\sum_{j=1}^N a_j^2\lambda_j^{2s-2}\biggr)^{1/2}\le \lambda \lambda^{\frac{s-1}{1+s}}
\biggl(\sum_{j=1}^N a_j^2\biggr)^{1/2}\le \lambda^{\frac{2s}{1+s}}\|h\|_{\lx}
$$
and that
$$
\|L_K\varphi-f_\rho\|_{\lx}^2=\sum_{j=N+1}^\infty \lambda_j^{2s}a_j^2\le \lambda^{\frac{2s}{1+s}}\sum_{j=N+1}^\infty a_j^2\le \lambda^{\frac{2s}{1+s}}\|h\|_{\lx}^2.
$$
Combing the above two inequalities leads to (\ref{dlambda1s}). The proof is complete.
\end{proof}

We remark that the estimated regularization error in \cite{Xiao2010} was of the order $O(\lambda^{\frac{2s}{2+s}})$ for $0<s\le 2$.

Turning to the sampling error, we follow the approach in \cite{Xiao2010} to decompose it into the sum $\cS(\bz,\lambda,g)=\cS_1(\bz,\lambda,g)+\cS_2(\bz,\lambda)$ where
$$
\cS_1(\bz,\lambda,g)=(\cE_\bz(g)-\cE_\bz(f_\rho))-(\cE(g)-\cE(f_\rho)),\ \ \cS_2(\bz,\lambda)=(\cE(f_{\bz,\lambda})-\cE(f_\rho))-(\cE_\bz(f_{\bz,\lambda})-\cE_\bz(f_\rho)).
$$
The first summand $\cS_1(\bz,\lambda,g)$ can be bounded by using the law of large numbers. By the same arguments as those in \cite{CuckerZhou2007,Xiao2010}, we use the estimate in Lemma \ref{regularizationerror} to obtain an improved bound.

\begin{lem}\label{samplingerror1}
Suppose that the output of sample data is bounded by a positive constant almost surely. If $0<s<1$ then for each $\varepsilon>0$ there exists some $g\in\cB$ such that for all $0<\delta<1$, we have with confidence $1-\frac\delta2$ that
$$
\cS_1(\bz,\lambda,g)\le C_1\left(\frac{\lambda^{\frac{2(s-1)}{1+s}}}{m}+\frac{\lambda^{\frac{2s-1}{1+s}}}{\sqrt{m}}\right)\log\frac2{\delta}
$$
for some positive constant $C_1$. If $s\ge1$ then $\cS_1(\bz,\lambda,f_\rho)=0$.
\end{lem}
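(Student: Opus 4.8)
The plan is to write $\cS_1(\bz,\lambda,g)$ as an empirical average minus its mean and to apply a one-sided Bernstein inequality (as in \cite{CuckerZhou2007,Xiao2010}). First I would set $\xi(x,y):=(g(x)-y)^2-(f_\rho(x)-y)^2=(g(x)-f_\rho(x))(g(x)+f_\rho(x)-2y)$, so that $\cS_1(\bz,\lambda,g)=\frac1m\sum_{j=1}^m\xi(x_j,y_j)-\int_{X\times Y}\xi\,d\rho$ is the difference of the sample mean of $\xi$ and its expectation. By the boundedness assumption there is a constant $M$ with $|y|\le M$ almost surely, and hence $|f_\rho(x)|\le M$ for $\rho_X$-almost every $x$. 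The case $s\ge1$ is then immediate: Lemma~\ref{regularizationerror} gives $f_\rho\in\cB$, and choosing $g=f_\rho$ makes $\xi$ identically zero, so $\cS_1(\bz,\lambda,f_\rho)=0$.

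For $0<s<1$ I would take $g$ to be the function built in the proof of Lemma~\ref{regularizationerror}, that is, $g=0$ when $\lambda_1\le\lambda^{1/(1+s)}$ and $g=L_K\varphi$ with $\varphi=\sum_{j=1}^{N}\lambda_j^{s-1}a_j\phi_j$ otherwise; the parameter $\varepsilon$ in the statement is inessential, since this explicit $g$ already realizes the regularization-error bound, and an $\varepsilon$-nearly optimal choice would carry the same sup-norm control used below. Two facts about $g$ are needed. First, writing $\kappa:=\sup_{x,t\in X}|K(x,t)|$, which is finite by continuity of $K$ on the compact set $X\times X$, and combining $\|L_K\varphi\|_{C(X)}\le\kappa\|\varphi\|_{L^1_{\rho_X}}\le\kappa\|\varphi\|_{L^2_{\rho_X}}$ with the estimate $\|\varphi\|_{L^2_{\rho_X}}\le\lambda^{(s-1)/(1+s)}\|h\|_{L^2_{\rho_X}}$ proved there, one gets $\|g\|_{C(X)}\le\kappa\,\|h\|_{L^2_{\rho_X}}\,\lambda^{(s-1)/(1+s)}$ (trivially so when $g=0$). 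Second, by (\ref{orthogonal}) and Lemma~\ref{regularizationerror}, $\|g-f_\rho\|_{L^2_{\rho_X}}^2=\cE(g)-\cE(f_\rho)\le\cD(\lambda,g)\le(\|h\|_{L^2_{\rho_X}}+\|h\|_{L^2_{\rho_X}}^2)\,\lambda^{2s/(1+s)}$.

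From these, using that $\lambda\le1$ (implicit throughout, so that the negative power $\lambda^{(s-1)/(1+s)}$ absorbs the additive constants), I would bound $|\xi(x,y)|\le(\|g\|_{C(X)}+M)(\|g\|_{C(X)}+3M)=:B$, a quantity of order $\lambda^{2(s-1)/(1+s)}$, and $\int_{X\times Y}\xi^2\,d\rho\le(\|g\|_{C(X)}+3M)^2\,\|g-f_\rho\|_{L^2_{\rho_X}}^2=:\sigma^2$, a quantity of order $\lambda^{2(s-1)/(1+s)}\cdot\lambda^{2s/(1+s)}=\lambda^{2(2s-1)/(1+s)}$. Applying the one-sided Bernstein inequality to the i.i.d.\ sample $\{\xi(x_j,y_j)\}_{j=1}^m$ gives, with confidence $1-\delta/2$,
$$
\cS_1(\bz,\lambda,g)\le\frac{2B}{3m}\log\frac2\delta+\sqrt{\frac{2\sigma^2}{m}\log\frac2\delta}.
$$
Substituting the orders of $B$ and $\sigma^2$ and using $\sqrt{\log(2/\delta)}\le\log(2/\delta)/\sqrt{\log 2}$ for $0<\delta<1$ to absorb the square root of the logarithm, one arrives at the stated bound with a constant $C_1$ depending only on $\kappa$, $M$ and $\|h\|_{L^2_{\rho_X}}$.

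The manipulations of $\xi$ and the bookkeeping of the powers of $\lambda$ are routine. The point that requires care is the sup-norm estimate on $g$: because $g$ is essentially $L_K$ applied to a truncated eigenexpansion, $\|g\|_{C(X)}$ grows like $\lambda^{(s-1)/(1+s)}$ as $\lambda\to0$, and it is this growth — not the variance — that produces the term $\lambda^{2(s-1)/(1+s)}/m$; one must also check that a single such bound covers both alternatives in Lemma~\ref{regularizationerror}. Keeping Bernstein's inequality in the form that retains the variance $\sigma^2$, rather than collapsing to a cruder $B/\sqrt m$, is what yields the improved $\lambda^{(2s-1)/(1+s)}/\sqrt m$ term and the overall gain over \cite{Xiao2010}.
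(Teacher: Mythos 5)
Your proposal is correct and is exactly the argument the paper has in mind: the paper gives no proof of this lemma, deferring to the standard one-sided Bernstein argument of \cite{CuckerZhou2007,Xiao2010} applied to $\xi=(g(x)-y)^2-(f_\rho(x)-y)^2$ with the explicit $g$ from the proof of Lemma \ref{regularizationerror}, and your bookkeeping ($B=O(\lambda^{2(s-1)/(1+s)})$ from the sup-norm of $L_K\varphi$, $\sigma^2=O(\lambda^{2(2s-1)/(1+s)})$ from the variance times the $L^2$ regularization bound) reproduces the stated exponents. The $s\ge1$ case via $g=f_\rho$ is likewise as intended.
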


For $\cS_2(\bz,\lambda)$, we cite the following result from \cite{Xiao2010}.

\begin{lem}\label{samplingerror2}
Suppose that (\ref{lip}) and (\ref{covering}) hold true. If $\lambda\le1$ then we have with confidence $1-\frac\delta2$ that
$$
\cS_2(\bz,\lambda)\le \frac12(\cE(f_{\bz,\lambda})-\cE(f_\rho))+C_2\frac{\log\frac2\delta+\log(1+m)}{\lambda^2}m^{-\frac1{1+\eta/\alpha}}
$$
for some positive constant $C_2$.
\end{lem}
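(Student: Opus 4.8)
Lemma~\ref{samplingerror2} is quoted verbatim from \cite{Xiao2010}, so one could simply refer to the proof there; for completeness I outline the route I would take. The idea is the standard reduction of a data-dependent deviation to a uniform one over a fixed ball of $\cB$, followed by a relative (ratio-type) Bernstein estimate. The first ingredient is an a priori bound on $\|f_{\bz,\lambda}\|_\cB$: if $M>0$ bounds $|y|$ almost surely, then comparing the minimizer $\vc_{\bz,\lambda}$ of \eqref{regularization2} with $\vc=0$ gives $\cE_\bz(f_{\bz,\lambda})+\lambda\|f_{\bz,\lambda}\|_\cB\le \cE_\bz(0)=\frac1m\sum_{j=1}^m y_j^2\le M^2$, so that $\|f_{\bz,\lambda}\|_\cB=\|\vc_{\bz,\lambda}\|_1\le R:=M^2/\lambda$ by \eqref{finitenorm}. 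Hence $f_{\bz,\lambda}$ lies in the ball $\cB_R:=\{f\in\cB:\|f\|_\cB\le R\}$, and every $f\in\cB_R$ satisfies $\|f\|_{C(X)}\le\kappa R$ with $\kappa:=\sup_{x\in X}K(x,x)$ and, by \eqref{lip}, is H\"older continuous of exponent $\alpha$ with constant at most $C_\alpha R$.

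Next I would bound $\cS_2(\bz,\lambda)$ by $\sup_{f\in\cB_R}\bigl[(\cE(f)-\cE(f_\rho))-(\cE_\bz(f)-\cE_\bz(f_\rho))\bigr]$ and estimate this uniform deviation. For $f\in\cB_R$ put $\xi_f(x,y):=(f(x)-y)^2-(f_\rho(x)-y)^2$; then $\bE\xi_f=\cE(f)-\cE(f_\rho)=\|f-f_\rho\|_{\lx}^2\ge0$ by \eqref{orthogonal}, while the factorization $\xi_f=(f-f_\rho)(f+f_\rho-2y)$ combined with the sup-norm bound on $\cB_R$ gives $|\xi_f|\le B$ and the variance-expectation inequality $\bE\xi_f^2\le B\,\bE\xi_f$, where $B$ is a constant multiple of $R^2$, equivalently of $\lambda^{-2}$. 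This last inequality is exactly what makes a relative Bernstein argument available: covering $\cB_R$ by an $\varepsilon$-net in $C(X)$, applying Bernstein's inequality to each net element and taking a union bound, one obtains that with confidence $1-\delta/2$, simultaneously for all $f\in\cB_R$,
$$
\bigl(\cE(f)-\cE(f_\rho)\bigr)-\bigl(\cE_\bz(f)-\cE_\bz(f_\rho)\bigr)\le\tfrac12\bigl(\cE(f)-\cE(f_\rho)\bigr)+\tau,
$$
with the additive term $\tau$ determined by balancing $m\tau/B$ against $\log\cN(\cB_R,c\tau)+\log(2/\delta)$. Taking $f=f_{\bz,\lambda}$ then yields the stated inequality for $\cS_2(\bz,\lambda)$.

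What remains is to size $\cN(\cB_R,\varepsilon)$ and carry out the balance, which I expect to be the real work (though not a conceptual obstacle). An $\varepsilon$-net of $\cB_R$ in $C(X)$ is produced by choosing an $r$-net of $X$ with $C_\alpha Rr^\alpha\le\varepsilon/2$, i.e. $r=(\varepsilon/(2C_\alpha R))^{1/\alpha}$, and discretizing the $\cN(X,r)$ sampled values to accuracy $\varepsilon/2$; with \eqref{covering} this gives $\log\cN(\cB_R,\varepsilon)\le C\,(R/\varepsilon)^{\eta/\alpha}\log(1+R/\varepsilon)$. Inserting $B\asymp\lambda^{-2}$ and $R\asymp\lambda^{-1}$ into the balance and solving for $\tau$ yields a quantity of order $\lambda^{-(2+\eta/\alpha)/(1+\eta/\alpha)}\bigl(\log(2/\delta)+\log(1+m)\bigr)m^{-1/(1+\eta/\alpha)}$; since $\lambda\le1$ and $(2+\eta/\alpha)/(1+\eta/\alpha)\le2$, this is at most a constant times $\lambda^{-2}\bigl(\log(2/\delta)+\log(1+m)\bigr)m^{-1/(1+\eta/\alpha)}$, which is the asserted bound. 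Matching the net resolution $\varepsilon$ to $\tau$ so that the entropy contribution and the Bernstein exponent are of the same order, and tracking how the radius $R=M^2/\lambda$ propagates through both, is the only delicate bookkeeping; the rest is the by now standard empirical-process machinery of \cite{CuckerZhou2007,Xiao2010}.
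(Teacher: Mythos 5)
The paper gives no proof here---the lemma is cited verbatim from \cite{Xiao2010}---and your sketch correctly reconstructs the argument used there and in \cite{CuckerZhou2007}: the a priori bound $\|f_{\bz,\lambda}\|_\cB\le M^2/\lambda$ obtained by comparing with $\vc=0$ in \eqref{regularization2}, reduction to a uniform ratio-type Bernstein bound over the ball of radius $R\asymp\lambda^{-1}$ using $\bE\xi_f^2\le B\,\bE\xi_f$ with $B\asymp\lambda^{-2}$, and the covering-number balance coming from \eqref{lip} and \eqref{covering}. Your bookkeeping is consistent with the stated bound (the sharp exponent $(2+\eta/\alpha)/(1+\eta/\alpha)\le 2$ is indeed absorbed into the cruder $\lambda^{-2}$ of the statement), the only implicit extra hypothesis being the almost-sure bound $|y|\le M$, which the paper assumes throughout Section 3.
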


Combining Lemmas \ref{twoterms}, \ref{regularizationerror}, \ref{samplingerror1}, and \ref{samplingerror2}, we reach a new learning rate estimate of the $\ell^1$-regularized least square regression.

\begin{thm}\label{newrate}
Suppose that $X$ satisfy (\ref{covering}), the output is bounded by a positive constant almost surely, and $f_\rho\in \ran(L_K^s)$ for some $s>0$. Let $K$ be a positive-definite reproducing kernel satisfying $\overline{\span}\{K(\cdot,x):x\in X\}=C(X)$, the condition (\ref{H2condition}) and the Lipschitz condition (\ref{lip}). Then there exists some constant $C>0$ such that with the choice $\lambda=m^{-\frac12\frac1{1+\eta/\alpha}\frac{1+s}{1+2s}}$, we have for all $0<\delta<1$ with confidence $1-\delta$ that
\begin{equation}\label{newrate1s}
\cE(f_{\bz,\lambda})-\cE(f_\rho)\le Cm^{-\frac{s}{1+2s}\frac1{1+\eta/\alpha}}\log\frac{2+2m}{\delta},\quad\mbox{ when }0<s<1
\end{equation}
and
$$
\cE(f_{\bz,\lambda})-\cE(f_\rho)\le Cm^{-\frac1{3(1+\eta/\alpha)}}\log\frac{1+m}{\delta},\quad\mbox{ when }s\ge1.
$$
\end{thm}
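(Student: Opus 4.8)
The plan is to derive Theorem \ref{newrate} by feeding the four preceding lemmas into the standard error-decomposition scheme of \cite{CuckerZhou2007,Xiao2010}; the only novelties are that the hypothesis error disappears (Lemma \ref{twoterms}) and that the regularization error is of the improved order $\lambda^{2s/(1+s)}$ rather than $\lambda^{2s/(2+s)}$ (Lemma \ref{regularizationerror}). I would begin from the inequality
$$
\cE(f_{\bz,\lambda})-\cE(f_\rho)\le\cS_1(\bz,\lambda,g)+\cS_2(\bz,\lambda)+\cD(\lambda,g),
$$
valid for every $g\in\cB$, which follows by combining Lemma \ref{twoterms} with the splitting $\cS=\cS_1+\cS_2$. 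I would set $\beta:=\frac1{1+\eta/\alpha}\in(0,1]$ and, in the regime $0<s<1$, write the prescribed parameter as $\lambda=m^{-\theta}$ with $\theta:=\frac12\beta\frac{1+s}{1+2s}>0$, noting that $\lambda\le1$ (since $m\ge1$), so that Lemma \ref{samplingerror2} applies.

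For $0<s<1$, the first step is to fix the $g\in\cB$ provided by Lemma \ref{samplingerror1} --- which is the same near-minimizer of $\cD(\lambda,\cdot)$ constructed in the proof of Lemma \ref{regularizationerror}, so that simultaneously $\cD(\lambda,g)\le C\lambda^{2s/(1+s)}$ and the $\cS_1$-bound of Lemma \ref{samplingerror1} hold. Next I would take a union bound over the two exceptional events of Lemmas \ref{samplingerror1} and \ref{samplingerror2}, each of probability at most $\delta/2$, so that with confidence at least $1-\delta$ both estimates are in force. Substituting them into the displayed inequality and moving the $\frac12(\cE(f_{\bz,\lambda})-\cE(f_\rho))$ term of Lemma \ref{samplingerror2} to the left, I would arrive --- after enlarging constants and using $\log\frac2\delta+\log(1+m)\le\log\frac{2+2m}\delta$ --- at a bound of the shape
$$
\cE(f_{\bz,\lambda})-\cE(f_\rho)\le C\Bigl(\frac{\lambda^{\frac{2(s-1)}{1+s}}}{m}+\frac{\lambda^{\frac{2s-1}{1+s}}}{\sqrt m}+\frac{m^{-\beta}}{\lambda^2}+\lambda^{\frac{2s}{1+s}}\Bigr)\log\frac{2+2m}\delta.
$$
The final step is to insert $\lambda=m^{-\theta}$ and read off the exponents: the third and fourth terms both become exactly $m^{-\frac{s}{1+2s}\beta}$ (this is what the choice of $\theta$ is designed to achieve --- it balances $\cD$ against $\cS_2$), the first term becomes $m^{-1+\beta\frac{1-s}{1+2s}}$ and the second $m^{-\frac12-\beta\frac{2s-1}{2(1+2s)}}$, and one checks that both of the latter exponents are $\le-\frac{s}{1+2s}\beta$ because $\frac{\beta}{1+2s}\le1$. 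This yields \eqref{newrate1s}.

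For $s\ge1$ I would instead take $g=f_\rho$, which lies in $\cB$ by Lemma \ref{regularizationerror} (equivalently because $\ran(L_K^s)\subseteq\ran(L_K)$); then $\cS_1(\bz,\lambda,f_\rho)=0$ by Lemma \ref{samplingerror1} and $\cD(\lambda,f_\rho)\le C\lambda$ by \eqref{dlambdas1}. Using only Lemma \ref{samplingerror2} (so with confidence $1-\frac\delta2\ge1-\delta$) and absorbing its $\frac12$-term as above gives a two-term bound $C\bigl(m^{-\beta}/\lambda^2+\lambda\bigr)\log\frac{1+m}\delta$, which is optimized by $\lambda=m^{-\beta/3}$ --- the prescribed choice with $s$ replaced by its effective value $1$ --- and produces the rate $m^{-\frac1{3(1+\eta/\alpha)}}$.

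The only real obstacle is the exponent bookkeeping in the balancing step for $0<s<1$: one must verify that the two ``cross'' sampling-error terms $\lambda^{2(s-1)/(1+s)}/m$ and $\lambda^{(2s-1)/(1+s)}/\sqrt m$ --- which are genuinely new relative to the RKHS analysis --- never dominate the $m^{-\frac{s}{1+2s}\beta}$ rate contributed by the regularization error and $\cS_2$, and one must keep the confidence levels coherent through the union bound. Everything else is the routine mechanics of the Cucker--Zhou framework.
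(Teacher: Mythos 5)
Your proposal is correct and follows essentially the same route as the paper: combine Lemmas \ref{twoterms}, \ref{regularizationerror}, \ref{samplingerror1} and \ref{samplingerror2} via a union bound, absorb the $\frac12(\cE(f_{\bz,\lambda})-\cE(f_\rho))$ term, set $\lambda=m^{-\theta}$, and balance the exponents; your verification that the two cross terms never dominate (via $\beta/(1+2s)\le1$) matches the paper's minimization of $\gamma$, and your explicit treatment of $s\ge1$ (with $\lambda=m^{-\beta/3}$, which is what the stated rate actually requires) fills in a case the paper dismisses as ``easier.''
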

\begin{proof}
We only discuss the case when $0<s<1$ as the other situation is easier and can be shown in a similar way. We choose $\lambda=m^{-\theta}$, $\theta>0$ and get by Lemmas \ref{twoterms}, \ref{regularizationerror}, \ref{samplingerror1}, and \ref{samplingerror2} that there exists some constant $C>0$ such that with confidence $1-\delta$
\begin{equation}\label{newrateeq1}
\cE(f_{\bz,\lambda})-\cE(f_\rho)\le Cm^{-\gamma}\log\frac{2+2m}{\delta},
\end{equation}
where
$$
\gamma=\min\biggl\{\frac1{1+\eta/\alpha}-2\theta,1-\frac{2\theta(1-s)}{1+s},\frac12-\frac{1-2s}{1+s}\theta,\frac{2\theta s}{1+s}\biggr\}.
$$
The maximum of $\gamma$ is achieved when
$$
\theta=\frac12\frac1{1+\eta/\alpha}\frac{1+s}{1+2s}.
$$
Substituting the above choice into (\ref{newrateeq1}) yields (\ref{newrate1s}).
\end{proof}

Improvements of the learning rate can be achieved if higher regularity is imposed on the kernel $K$ \cite{Zhou2003} or better estimates of the sampling error are engaged \cite{Shi2011}. Another remark is that the assumption of positive-definiteness and symmetry on $K$ might be abandoned by using the strategy in \cite{Xiao2010}.

{\small
\bibliographystyle{abbrv}
\bibliography{BanachReg}
}

\end{document}